\DeclareMathOperator*{\argmax}{\arg\!\max}
\newtheorem{theorem}{Theorem}
\title{Heuristic Online Goal Recognition in Continuous Domains} 
\author{Mor Vered\\ 
Bar Ilan University, Ramat-Gan, Israel\\
veredm@cs.biu.ac.il
\And
Gal A. Kaminka\\
Bar Ilan University, Ramat-Gan, Israel\\
galk@cs.biu.ac.il}
\begin{document}

\maketitle

\begin{abstract}
Goal recognition is the problem of inferring the goal of an agent, based
on its observed actions. 
An inspiring approach---plan recognition by planning (PRP)---uses 
off-the-shelf planners to dynamically generate plans for given goals, eliminating the need for the traditional plan library.
However, existing PRP formulation is inherently inefficient in online recognition, 
and cannot be used with motion planners for continuous spaces.
In this paper, we utilize a different PRP formulation which
allows for online goal recognition, and for application in continuous spaces.
We present an online recognition algorithm, where two heuristic decision points 
may be used to improve run-time significantly over existing work. 
We specify heuristics for continuous domains, prove guarantees on their use, and empirically evaluate the algorithm over n 
 hundreds of experiments in both a 3D navigational environment and a cooperative robotic team task.
\end{abstract}
\vspace{-8pt}

\sloppy

\section{Introduction}
\vspace{-4pt}

Goal recognition is the problem of inferring the (unobserved) goal of an agent, based
on a sequence of its observed actions~\cite{hong01,blaylock06,baker2007goal,lesh95}. 
 It is a fundamental research problem in artificial intelligence, closely related to plan, activity, and intent recognition~\cite{pairbook}. 
The traditional approach to plan recognition is through the use of a plan-library, a pre-calculated library of known plans to achieve known goals~\cite{pairbook}. 
Ramirez and Geffner~\shortcite{ramirez2010probabilistic}
introduced a seminal recognition approach which avoids the use of a plan library completely. Given a set of goals $G$, the \textit{Plan Recognition by Planning (PRP)} approach uses off-the-shelf planners in a blackbox fashion, to dynamically generate 
recognition hypotheses as needed. 

The use of PRP in continuous domains, and in an online fashion (i.e., when observations are made incrementally)
raises new challenges. The original~\shortcite{ramirez2010probabilistic} formulation, relies on synthesizing two optimal plans for every goal $g\in G$: (i)  a plan to reach goal $g$ in a manner compatible with the observations $O$; and (ii) a plan to
reach goal $g$ while (\textit{at least partially}) deviating from $O$, i.e. complying with $\overline{O}$. The likelihood of each goal is computed from the difference in \textit{costs} of optimal solutions to the two plans.
Overall, $2|G|$ planning problems are solved, two for each goal. 

However, in \emph{online} recognition the set $O$ is incrementally revealed, and $\overline{O}$ changes with it. Thus two new planning problems are  solved with \textit{every new observation}, for a total of $2|G||O|$ calls to the planner instead of $2|G|$. In addition, using an off-the-shelf continuous-space planner to generate a plan that may \emph{partially} go through previous observations, but must not go through all of them, is currently impossible given the state of the art.

We present a general heuristic algorithm for online recognition in continuous domains that solves \textit{at most} $|G|(|O|+1)$ planning problems, and \textit{at best}, $|G|$. The algorithm relies on an alternative formulation, that does not use $\overline{O}$. It has two key decision points where appropriate heuristics reduce the number of calls to the planner: for each new observation, the first decision is whether to generate and solve a new planning problem for each $g$, or remain with the former calculated plans. In the best case, this may reduce the number of overall calls to the planner to only $|G|$ calls. A second decision is whether to prune unlikely goal candidates, 
incrementally reducing $|G|$, thus making fewer calls to the planner. 

We describe the algorithm in detail, and examine several heuristic variants. 
Utilizing off-the-shelf continuous-space planners, without any modification, we
evaluate the different variants in hundreds of recognition problems, in two continuous-environment tasks: a standard motion planning benchmark, and simulated ROS-enabled robots utilizing goal-recognition for coordination.

\section{Related Work}
\label{sec:related}
\vspace{-4pt}

Sukthankar et al.~\shortcite{pairbook} provide a survey of recent work in goal and plan recognition, most of it
assuming a library of plans for recognition of  goals. Though successful in many applications, library-based methods
are limited to recognizing known plans. 
 Alternative methods have been sought.



 
Geib~\shortcite{geib15acs}, and Sadeghipour et al.~\shortcite{sadeghipour2011embodied} 
offer methods that utilize the same library for both planning and plan-recognition.
Hong~\shortcite{hong01} presents an online method, with no use of a library, but lacking the ranking of the recognized goals. Baker et.al~\shortcite{baker2005bayesian} present a Bayesian framework to calculate goal likelihoods, marginalizing over possible actions. 
Keren et al.~\cite{keren15} investigate ways to ease goal recognition by modifying the domain.

Ramirez and Geffner~\shortcite{ramirez2010probabilistic} proposed the  PRP formulation (which plans for $g$ twice: with $O$ and with $\overline{O}$), for offline recognition. We build on their earlier formulation~\shortcite{ramirez2009plan}, 
in which they did not probabilistically rank the hypotheses, as we do here. This allows us to more efficiently compute the likelihood of different goals, given incrementally revealed observations. We embed this formulation in a definition of plan recognition
for continuous spaces, which also varies from the original in that the recognizer observes effects, rather than actions.

Other investigations of PRP exist. \cite{masters2017cost} provided a simpler formula than that of~\shortcite{ramirez2010probabilistic} achieving identical results in half the time, still in discrete environments. Sohrabi et. al~\shortcite{riabov2016plan} also observe effects, though in discrete environments, and have also sought to eliminate planner calls, by using a $k$-best planner in an offline manner to sample the plans explaining the observations. Ramirez and Geffner~\shortcite{ramirez2011goal} extend the model to include POMDP settings with partially observable states. 
Martin et. al~\shortcite{yolanda2015fast} and Pereira et al.~\shortcite{aaai17landmarks} refrain from using a planner at all, instead using pre-computed information (cost estimates and landmarks, resp.) to significantly speed up the recognition. These approaches complement ours.
Vered and Kaminka~\shortcite{acs16} present an online recognizer, which we prove is a special case of the algorithm
we present here. We go a significant step beyond by introducing heuristics to significantly improve both run-time and accuracy.

 
%
%

\section{Goal Recognition in Continuous Spaces}
\label{sec:goal-mirror}
\vspace{-4pt}

We begin by giving a general definition of the goal recognition problem in continuous spaces ( Section \ref{sec:def}). 
We proceed to develop an efficient \emph{online} recognizer, which can utilize heuristics to further improve the efficiency ( Section \ref{sec:alg}).
We then discuss such heuristics in detail ( Section \ref{sec:heuristics}).

\subsection{Problem Formulation}
\label{sec:def}

We define $R$, the \emph{online goal recognition problem in continuous spaces} as a quintuple $R=\langle W, I, G, O, M\rangle$.  
$W\subseteq\mathbb{R}^n$ is the world in which the observed motion takes place, as defined in standard motion planning~\cite{lavalle06}. 
$I \in W$, the initial pose of the agent. 
 $G$, a set of 
 goals; 
 each goal $g\in W$, i.e.,   
a point.  $O$, a discrete set of observations, where for all $o\in O$, $o\subset W$, i.e.,
each observation a specific subset of the work area i.e., a point or trajectory.
$M$, a (potentially infinite) set of \emph{plan trajectories}, each beginning in $I$,
and ending in one of the goal positions $g\in G$. For each goal $g$, there exists
at least one plan $m_g \in M$ that has it as its end point.

Intuitively, given the problem $R$, a solution to the \emph{goal
recognition} problem is a specific goal $v\in G$ that 
\emph{best matches} the observations $O$. 
For each goal $g$, trajectories $m_g$
(ending with $g$) are matched against the observations $O$.

Formally, we seek to determine $v\equiv argmax_{g\in G} Pr(g|O)$. 
Ramirez and Geffner~\shortcite[Thm. 7]{ramirez2009plan} have shown that necessarily, a goal $g$ is a solution to the goal recognition problem \emph{iff} the cost of an \textit{optimal} plan to achieve $g$ (denoted here $i_g$, for \emph{ideal} plan) is equal to the cost of an \textit{optimal} plan that achieves $g$, while including \emph{all} 
the observations (a plan we refer to as $m_g$).
Vered and Kaminka~\shortcite{acs16} build on this to establish a ranking over the goals. They define the ratio $score(g)\equiv \frac{cost(i_g)}{cost(m_g)}$, and rank goals higher as $score(g)$ gets closer to 1. They show experimentally that the ratio works well
in continuous domains, and thus we use it here (ignoring priors on $Pr(g)$ for simplicity):
$Pr(g|O)\equiv\eta\textit{score}(g)$, 
where the normalizing constant $\eta$ is $1/\sum_{g\in G}{\textit{score}(g)}$.

The next step is to compute the plans $i_g$ (ideal plan, from initial pose $I$ to goal $g$) and $m_g$ (an optimal plan that includes the observations). As described in~\cite{acs16}, computing $i_g$ is a straightforward application of a planner.
The synthesis of $m_g$ is a bit more complex, as $m_g$ candidates  must  minimize the error in matching the observations. 

To do this, we take advantage of the opportunity afforded by the equal footing of observations $O$ and plans in $M$ in continuous environments. Each observation is a trajectory or point in continuous space. Each plan is likewise a trajectory in the same space, as plans are modeled by their effects.
Thus generating a plan $m_g$ that perfectly matches the observations is done by composing it from two parts: 

\begin{itemize}
	\item A plan prefix,
	(denoted $m^-_g$) is built by concatenating all observations in $O$ into a single 
	 trajectory (~\cite{masters2017cost} have shown that the same plan prefix may be generated for all possible trajectories ). 
	\item A plan suffix (denoted $m^+_g$) is generated by calling the planner, to generate a trajectory from the last observed point in the prefix $m^-_g$ (the ending point of the last observation in $O$) to the goal $g$.  
\end{itemize}
Using $\oplus$ to denote trajectory concatenation, a plan $m_g\equiv m^-_g\oplus m^+_g$ is a trajectory from the first observed point in $O$, to $g$.  Notice that $m_g$ necessarily \textit{perfectly} matches the observations $O$, since it incorporates them.

Given a goal $g$ and a sequence of observations $O$, the planner is called twice: to generate
$i_g$ and to generate $m_g^{+}$, used to construct $m_g$. The cost of $i_g$ and $m_g$ is
contrasted using a scoring procedure, denoted $match(m_g,i_g)$, which uses the ratio as
described above.  As $i_g$ does not depend on $O$, it can be generated once for every goal,
while $m_g$ needs to be re-synthesized from its component parts as $O$ is incrementally revealed.
This establishes the baseline of $(1+|O|)|G|$ calls to the planner~\cite{acs16}. We now generalize this procedure to further improve on this baseline.

\subsection{Heuristic Online Recognition Algorithm}

\label{sec:alg}

We identify two key decision points in the baseline recognition
process described above, that can be used to improve its efficiency: 
\begin{itemize}
\item Recompute plans only if necessary, i.e., if the new observation may change the ranking (captured by a $RECOMPUTE$ function); 
\item Prune (eliminate) goals which are impossible or extremely unlikely (as they deviate too much from the ideal plan $i_g$), (captured by the $PRUNE$ function).
\end{itemize}
 A good $RECOMPUTE$ heuristic reduces calls to the planner by avoiding unnecessary computation of $m_g^+$ for new observations. A good $PRUNE$ heuristic reduces calls to the planner by eliminating goals from being considered for future observations.
Using appropriate heuristics in these functions, we can reduce the number of calls made to the planner and consequently overall recognition run-time.  This section presents the algorithm. The next section will examine candidate heuristics.

Algorithm~\ref{alg:online} begins (lines 3--5) by computing the ideal plan $i_g$ for all goals, once. It also sets $i_g$ as a default plan suffix $m_g^+$. This suffix guarantees that valid (though not necessarily optimal plans) $m_g$ can be created from $m_g^+$, even in the extreme case where no computation of $m_g^+$ is ever done.
Then, the main loop begins (line 6), iterating over observations as they are made available. 
We then reach the first decision: should we recompute the suffix $m_g^+$ (line 7).

We will begin by giving a general outline.
The $RECOMPUTE$ function takes the current winning trajectory $m_\upsilon$ ($\upsilon$ is the current top-ranked goal) and the latest observation $o$. It matches the observation to $m_\upsilon$ and heuristically determines (see next section) whether $o$ may cause a change in the ranking of the top goal $\upsilon$.  If so, then the suffixes $m_g^+$ of all goals (lines 8--12) will be recomputed (lines 11--12), unless pruned (lines 9--10). Otherwise (lines 13--15) the current suffix $m_g^+$ of all goals will be modified based on $o$, but without calling the planner.

\begin{algorithm}[htbp]
\caption{{\sc Online Goal Recognition} $(R, planner)$}
\label{alg:online}
\begin{algorithmic}[1]

\STATE $\forall g: m_g,m_g^-\leftarrow\emptyset$

\STATE$\upsilon\leftarrow\emptyset$ \hfill $\rhd$ the top-ranked goal 
\FORALL{$g\in G$}
	\STATE	$i_g \leftarrow \textit{planner}(I,g)$
	\STATE  $m_g^+ \leftarrow i_g$ \hfill $\rhd$ default value for plan suffix
\ENDFOR
\WHILE{new $o\in O$ available}

	\IF{ $\sc{RECOMPUTE}(m_\upsilon,o) $}

		\FORALL{$g\in G$}
			\IF{ $\sc{PRUNE}(m_g^+,o,g)$}

				\STATE $G \leftarrow G-\{g\}$
			\ELSE

				\STATE $m_g^+ \leftarrow \textit{planner}(o,g)$

			\ENDIF
		\ENDFOR
	\ELSE
		\FORALL{$g\in G$}
			\STATE $m_g^+ \leftarrow m_g^+ \ominus \textit{prefix}(o,m_g^+)$ 
		\ENDFOR
	\ENDIF
	\FORALL{$g\in G$}
 		\STATE $m_g^- \leftarrow m_g^-\oplus o$
		\STATE $m_g\leftarrow m_g^-\oplus m_g^+$

	\ENDFOR
	\FORALL{$g\in G$}
			\STATE $Pr(g|O)\leftarrow\eta\cdot\textit{score}(g)$
	\ENDFOR
	\STATE $\upsilon\leftarrow\argmax_{g\in G}{P(g|O)}$
\ENDWHILE
\end{algorithmic}
\end{algorithm}

\paragraph{Recomputing $\mathbf{m_g^+}$.} Here a straightforward call to the planner is made per the discussion in section \ref{sec:def}, to generate an optimal trajectory. From the initial point (the \textit{last} point in $o$, as $o$ might contain more than a single point), to the goal $g$.

\paragraph{Modifying $\mathbf{m_g^+}$.} When no recomputation of the suffix is deemed necessary, $o$ will be added to the prefix $m_g^-$, and the existing $m_g^+$ must be updated so that it continues $m_g^-$ 
and leads towards $g$. The baseline algorithm calls a planner to do this, but the point of this step is to approximate the planner call so as to avoid its run-time cost. This is done by removing (denoted by $\ominus$) any parts that are \emph{inconsistent} with respect to the observation from the beginning of the old suffix $m_g^+$ . 

The old $m_g^+$ begins where the \emph{old} $m_g^-$ (without $o$) ended.  The new $m_g^+$ should ideally begin with the last point of the \emph{new} $m_g^-$ (which is the new observation $o$), and continue as much as possible with the old $m_g^+$. Thus a prefix of the old $m_g^+$, denoted ($prefix(o,m_g^+)$, line 15) is made redundant by $o$ and needs to be removed. If $o$ is directly on $m_g^+$, then $prefix(o,m_g^+)$ is exactly the trajectory from the beginning point of $m_g^+$ to $o$. But in general, we cannot expect $o$ to be directly on $m_g^+$. We thus define the ending point
for the prefix to be $\widetilde{o}$, the geometrically closest point to $o$ on $m_g^+$.

\paragraph{Pruning.} Intuitively, when the newest observation $o$
leads \emph{away} from a goal $g$, we may want to eliminate the goal from being considered further, by permanently removing it from $G$. This is a risky decision, as a mistake will cause the algorithm to become unsound (will not return the correct result, even given all the observations). On the other hand, a series of correct decisions here can incrementally reduce $G$ down to a singleton ($|G|=1$), which will mean that the number of calls to the planner in the best case will approximate $(|O|+1)$.

\paragraph{}
Finally, when the algorithm reaches line 16, a valid suffix $m_g^+$ is available for all goals in $G$. For all of them, it then concatenates the latest observation to the prefix $m_g^-$ (line 17), and creates a new plan $m_g$ by concatenating the prefix and suffix (line 18). This means that a new $score(g)$ can be used to estimate $Pr(g|O)$ (lines 19--20), and a (potentially new) top-ranked goal $\upsilon$ to be selected (line 21).

\subsection{Recognition Heuristics}
\label{sec:heuristics}
Algorithm~\ref{alg:online} is a generalization of the algorithm described in~\cite{acs16}. By varying the  heuristic functions used, we can specialize its behavior to be exactly the same (Thm.~\ref{thm:baseline}), or change its behavior in different ways.

\begin{theorem}
\label{thm:baseline}
If $RECOMPUTE=\top$ and $PRUNE=\bot$ then Algorithm~\ref{alg:online} will generate exactly the same number of planner calls as the algorithm in~\cite{acs16}.
\end{theorem}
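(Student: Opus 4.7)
The plan is to carry out a direct accounting of planner invocations in Algorithm~\ref{alg:online} under the two fixed heuristic choices, and verify the total matches the $(1+|O|)|G|$ baseline claimed for~\cite{acs16}.

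First I would handle the initialization phase (lines 3--5). The loop iterates once per goal and makes exactly one planner call (line 4) to compute $i_g$; the subsequent assignment $m_g^+\leftarrow i_g$ is a copy and costs no planner call. This contributes $|G|$ calls, and importantly the set $G$ is the original goal set at this point.

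Next I would analyze the main while loop (lines 6--22), arguing iteration by iteration. Because $RECOMPUTE=\top$, the test in line 7 always succeeds, so control always enters the \textbf{then}-branch (lines 8--12) and never touches the \textbf{else}-branch (lines 13--15), which contains no planner calls anyway. Because $PRUNE=\bot$, the test in line 10 always fails, so line 11 never executes and $G$ is never shrunk; consequently line 12 executes for every $g$ in the original $G$, contributing one planner call per goal. Thus each iteration of the while loop makes exactly $|G|$ planner calls. Lines 16--21 perform only concatenations, score computations, and an argmax, none of which invoke the planner. Since the while loop runs once per incoming observation, there are $|O|$ iterations, giving $|G|\cdot|O|$ planner calls inside the loop.

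Summing the two phases yields $|G|+|G|\cdot|O|=(1+|O|)|G|$, which is exactly the baseline count attributed to~\cite{acs16} in the paragraph preceding Section~\ref{sec:alg}. The only thing worth being careful about is confirming that (i) $G$ remains the original goal set throughout (guaranteed by $PRUNE=\bot$), so the per-iteration count is a constant $|G|$ rather than a shrinking quantity, and (ii) no planner calls are hidden in the bookkeeping lines; both are immediate from inspection of the pseudocode. There is no real mathematical obstacle here — the argument is essentially a line-by-line control-flow check — so I would keep the proof short and purely combinatorial.
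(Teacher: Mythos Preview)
Your proposal is correct and takes essentially the same approach as the paper, whose proof is only a brief sketch making the identical observation: with $RECOMPUTE$ always true and $PRUNE$ always false, initialization contributes $|G|$ calls for the $i_g$, and each of the $|O|$ iterations contributes $|G|$ calls for the $m_g^+$, for a total of $(1+|O|)|G|$. Your argument is simply a more explicit, line-by-line elaboration of that sketch.
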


\begin{proof}
({\it Sketch}) By setting $RECOMPUTE$ to be \textit{always true}, and  $PRUNE$ to be always false, initially a single call to the planner will be made to calculate $i_g$, and then a new call to generate $m_g^+$ will be made for all goals and every observation (since no calls will be skipped and no goals would be pruned). This is in accordance with the behavior of the algorithm reported in~\cite{acs16}.
\end{proof}

Let us now turn to examine how to avoid unnecessary planner calls. An ideal scenario would be for Alg.~\ref{alg:online} to never compute a new suffix $m_g^+$, for any goal. In line 5 of Alg.~\ref{alg:online} the initial suffixes $m_g^+$ are set to the ideal plans, $i_g$. If $RECOMPUTE$ is
\emph{always false}, then no new planner calls will be made and $m_g^+$ will be incrementally modified (Alg.~\ref{alg:online}, line 15) to accommodate the observations.

This approach offers significant savings (Thm.~\ref{thm:norecompute}), and in the best case, when the observations closely match the originally calculated paths, can produce good recognition results. 
 However, realistically, the observations may contain a certain amount of noise or the observed agent may not be perfectly rational. 
Moreover, it could be that the observed agent is perfectly rational and there is no noise in the observations and yet the approach will
fail. This is due to cases where there are multiple perfectly-rational (optimal) plans, which differ from each other but have the exact same optimal cost. In such cases, it is possible that the planner used by the recognizer will generate an ideal plan $i_g$ which differs
from an equivalent---but different---ideal plan $m_g$ carried out by the observed agent.

\begin{theorem}
\label{thm:norecompute}
If $RECOMPUTE=\bot$ there will be exactly $|G|$ calls to the planner.
\end{theorem}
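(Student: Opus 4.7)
The plan is a direct trace through Algorithm~\ref{alg:online}, counting planner invocations as a function of the control flow dictated by the heuristic settings. The only subtlety is to confirm that no hidden planner call lurks in the ``modify'' branch or in the scoring step; everything else is a matter of bookkeeping.

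First I would account for the initialization block (lines 3--5). This loop iterates once per $g\in G$ and contains a single invocation $planner(I,g)$ on line 4. No other line in the initialization makes a planner call, since line 5 only assigns the already-computed $i_g$ as the default suffix. This contributes exactly $|G|$ calls.

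Next I would analyze the main \textbf{while} loop (lines 6--22). The only line that invokes the planner inside the loop is line 12, and it sits strictly inside the \textbf{then}-branch guarded by $RECOMPUTE(m_\upsilon,o)$ on line 7. Under the hypothesis $RECOMPUTE=\bot$, this branch is never entered for any observation, so line 12 is never executed. The \textbf{else}-branch (lines 13--15) only performs the trajectory operation $m_g^+ \leftarrow m_g^+ \ominus \textit{prefix}(o,m_g^+)$, which by definition removes a prefix from an existing trajectory and does not call the planner. The remaining code in the loop body (lines 16--21) consists of trajectory concatenations, score computations, and an $\argmax$; none of these invoke the planner either. Since $PRUNE$ is irrelevant here (it is only consulted inside the skipped branch), the loop contributes zero planner calls regardless of $|O|$.

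Summing the two contributions gives exactly $|G|+0=|G|$ calls, which is the desired count. I do not expect any genuine obstacle: the claim is essentially a syntactic property of the pseudocode, so the ``hard part'' is merely to be explicit that (i) the $\ominus$ operation and the $prefix$ computation are purely geometric and do not secretly invoke $planner$, and (ii) the scoring step uses the already-stored $i_g$ and $m_g$ rather than re-planning. Once those two observations are stated, the count follows immediately.
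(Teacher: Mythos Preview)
Your proposal is correct and follows exactly the approach the paper intends: the paper itself writes only ``Straightforward, omitted for space,'' and your trace through Algorithm~\ref{alg:online}---counting the $|G|$ calls in lines 3--5 and noting that line 12 is unreachable when $RECOMPUTE=\bot$---is precisely the straightforward argument being alluded to.
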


\begin{proof} 
Straightforward, omitted for space.
\end{proof}

\vspace{-8pt}
\paragraph{$\mathbf{RECOMPUTE}$.}
As we cannot realistically expect the observations to perfectly match the predictions, we need a heuristic that evaluates to \emph{false} when the new observation $o$ does not alter the top ranked goal $\upsilon$ (saving a redundant $|G|$ calls to the planner), and evaluates to \emph{true} otherwise.  

A suggestion for such a heuristic in continuous domains is to measure the shortest distance $dist(o,m_g)$
between $o$ and all plans $m_g$. If $dist(o,m_\upsilon)$ is shortest we can assume the observed agent is still heading towards the same goal and do not
need to re-call the planner, keeping the current rankings.

\vspace{-4pt}
\paragraph{$\mathbf{PRUNE}$.}
Finally, we introduce a pruning heuristic for observing rational agents in continuous domains. It
is inspired by studies of human estimates of
intentionality and intended action~\cite{bonchek2014towards}.
Such studies have shown a strong bias on part of humans to prefer hypotheses that interpret
motions as continuing in straight lines, i.e., without deviations from, or
corrections to, the heading of movements. Once a rational agent is moving away or past a goal
point $g$, it is considered an unlikely target.

\begin{figure}[htbp]
\centering
\includegraphics[clip,width=0.60\columnwidth,keepaspectratio]{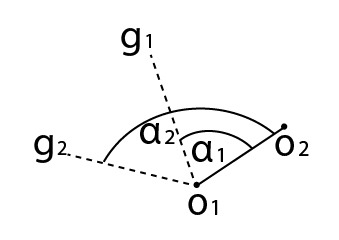}
\caption{ Illustration of goal angles used in pruning heuristic.}
\label{fig:explainHeuristic}
\end{figure}

To capture this, the $PRUNE$ heuristic takes a geometric approach.  
We calculate $\alpha_g$, the angle created between the
end-point of  the old $m_g^-$, the newly received observation $o$ and the previously
calculated plan $m_g$.  $\alpha_g$ is calculated using the \emph{cosine}
formula, $\textit{cos}(\alpha)=(\vec{u}\cdot\vec{v})/(||\vec{u}||||\vec{v}||)$, where $\vec{u}$ is the vector created by the previous and new observation and $\vec{v}$, the vector created by the previously
calculated plan and the new observation.

Figure~\ref{fig:explainHeuristic} presents an illustration of the
heuristic approach in 2D.  For the new observation, $o_2$, we measure the angle $\alpha_i$ created
by the new observation $o_2$, the previous observation $o_1$ (which ends $m_g^-$) and the previous plans  $m_{1,2}$ (shown as the dashed lines). 
If the angle is bigger than a given threshold we deduce that the
previous path is heading in the wrong direction and prune the goal. 
By defining different sized threshold angles we can relax or strengthen
the pruning process as needed.

\section{Evaluation}
\label{sec:evaluation}
\vspace{-4pt}

We empirically evaluated our online recognition approach and the suggested heuristics over hundreds of goal recognition
problems while measuring both the efficiency of the approach in terms of run-time and overall number of 
calls to the planner, and the performance of the approach in terms of convergence and correct ranking of the chosen goal. We additionally implemented our approach on simulated ROS-enabled robots while measuring the efficiency of the algorithm as compared to two separate approaches, one containing full knowledge of the observed agents' intentions and the other containing no knowledge and no reasoning mechanism.  


\vspace{-2pt}
\subsection{Online Goal Recognition In A 3D Navigation Domain}
\label{sec:exp:results1}

\label{sec:exp:navigation}

We implemented our approach and the proposed heuristics to recognize the goals of navigation in 3D worlds. 
We used TRRT (Transition-based Rapidly-exploring Random Trees), an off-the-shelf planner that guarantees asymptotic near-optimality 
, available as part of the Open Motion Planning Library (OMPL \cite{ompl}) along with the  OMPL \emph{cubicles} environment and
default robot. 
Each call to the planner was given a time limit of 1 sec. The cost measure being the length of the path. For the \emph{Pruning} heuristic we used a threshold angle of $120^\circ$.


We set 11 points
spread through the cubicles environments. We then generated two observed
paths from each point to all others, for \textit{a total of $110\times 2$ goal recognition problems}.
The observations were obtained by running an RRT* planner on each pair of
points, with a time limit of 5 minutes per run. RRT* was chosen because it is an optimized planner
that guarantees asymptotic optimality. The longer the run-time the more optimal the path.
Each problem contained between 20-76 observed points.

\vspace{-4pt}
\paragraph{Performance Measures}

We use two measures of recognition performance : (1) the time (measured 
by number of observations from the end) in which the recognizer converged 
to the correct hypothesis (including 0 if it failed). Higher values indicate earlier convergence and are therefore better; and (2) 
the number of times they ranked the correct hypothesis
at the top (i.e., rank 1), which indicates their general accuracy. The more frequently the recognizer ranked the correct hypothesis at the top, the more reliable it is, 
hence a larger value is better.



\vspace{-4pt}
\paragraph{Efficiency Measures}

In order to evaluate the overall \emph{efficiency} of each approach we also used two separate measures: 
(1) the number of times the planner was called within the recognition process; and (2) the overall time (in sec.) spent planning. Though these two parameters are closely linked, they are not wholly dependant. 
While a reduction in overall number of calls to the planner will also necessarily result in a reduction in planner run-time, the total amount of time allowed for each planner run may vary according to the difficulty of the planning problem and therefore create considerable differences. 


\subsection{Effects of the different heuristic approaches}
\label{sec:exp:heuristic-comparison}

We ran the TRRT based
recognizer on the above-mentioned 220 problems, comparing the different approaches. 
The results are displayed in
Table~\ref{tab:allResults}, columns 1--4. 
\emph{Baseline}, refers to the algorithm of Vered et al.~\shortcite{acs16}, 
making $(O+1)|G|$ planner calls. 
The second approach, 
\emph{No Recomp}, refers to the method of no recomputation at all, meaning the planner is only utilized once in the beginning of the process, to calculate $i_g$, the ideal path, for all of the goals. 
The third approach, \emph{Recompute}, measures the effect of \emph{RECOMPUTE} which aims to reduce overall number of calls to the planner 
(Section~\ref{sec:alg}). 
The fourth approach, \emph{Prune}, measures the effect of \emph{PRUNE} which aims to reduce the overall number of goals by eliminating
unlikely goal candidates (Section~\ref{sec:alg}). 
And the last approach, \emph{Both}, measures the effects of utilizing a combination of both the \emph{Pruning} and \emph{Recompute Heuristics}.

\vspace{-4pt}

\begin{table*}[htbp]
\fontsize{7.5}{9}\selectfont
\centering
\begin{tabular}{|c|c|c|c|c|c|c|c|c|}
\cline{2-9}
\multicolumn{1}{l}{} & \multicolumn{4}{|c|}{\textbf{10 goals}}   & \multicolumn{4}{|c|}{\textbf{19 goals}} \\ \cline{2-9}
\multicolumn{1}{l|}{}  & \multicolumn{2}{|c|}{\textbf{Efficiency}} & \multicolumn{2}{|c|}{\textbf{Performance}} & \multicolumn{2}{|c|}{\textbf{Efficiency}} & \multicolumn{2}{|c|}{\textbf{Performance}} \\ \cline{2-9}
\multicolumn{1}{l|}{} & \textbf{Run-Time} & \textbf{PlannerCalls} & \textbf{Conv.} & \textbf{Rank.} & \textbf{Run-Time} & \textbf{PlannerCalls} &\textbf{Conv.} & \textbf{Rank.} \\ \hline
\textbf{Baseline} & 105.02 & 265.05 & 21.82 & 20.24 & 194.65 & 516.57 & 16.37 & 19.54 \\ \hline
\textbf{Recompute} & 49.98 & 148.94 & 25.44 & 33.91 & 126.75 & 397.85 & 18.7 & 22.76 \\ \hline
\textbf{Prune} & 74.90 & 151.46 & 42.16 & 40.50 & 160.29 & 386.53 & 23.18 & 24.03 \\ \hline
\textbf{Both} & 36.49 & 90.68 & 42.41 & 40.21 & 97.63 & 287.36 & 20.98 & 25.82 \\ \hline
\textbf{No Recompute} & 7.10 & 9.00 & 6.77 & 9.54 & - & - & - & - \\ \hline
\end{tabular}
\caption{Comparison of all approaches across scattered and clustered goal scenarios.}
\label{tab:allResults}
\end{table*}

\paragraph{Efficiency }

Table~\ref{tab:allResults}, 
column 1, displays the average of the results of each
approach as the mean of total planner run-time 
measured
in seconds. 
 When only calling the planner once in the recognition process the \emph{No Recompute} approach takes an average of only 7.1 sec. 
 
 \emph{Baseline} has a time average of 105 sec. 

 The \emph{Pruning} heuristic 
 reduces the average time  to only 74.9 sec. And the \emph{Recompute} heuristic further reduces the average time to 49.9 sec. When utilizing both heuristics we achieved a reduction to 36.5 sec. an improvement of a substantial 

 65.25\% from the \emph{Baseline} approach.

The second column displays the average of the
results in terms of number of calls made by the recognizer to the planner.
The \emph{No Recompute} approach had an average of an extremely efficient 9 calls, i.e. the number of goals. \emph{Baseline} had an average of 265.05 calls
while the \emph{Recomputation} and \emph{Pruning} heuristics had similar success with a  further reduction to 148.9 and 151.4 calls each. Using both heuristics the number of calls was reduced to an average of only 90.6 calls, a reduction of 63.3\% from the \emph{Baseline} approach.

In conclusion we see that employing the heuristics makes a big impact on
run-time and successfully reduces overall number of calls to the planner. While the \emph{recomputation} heuristic outperformed the \emph{pruning} heuristic, both in run-time and overall number of calls, utilizing both heuristics can reduce both run-time and number of calls made to the planner by over 60\% 
from the baseline approach. The most efficient method proved to be the \emph{No Recompute} approach, only calculating $|G|$ plans. We will now show that this improvement in efficiency costs considerably in performance. 

\paragraph{Performance }

Table~\ref{tab:allResults}, column 3,  measures the average convergence to the correct result percent, higher values are better.
As we can see with no reuse of the planner at all, \emph{No Recompute} only produces 6.7\% convergence. 
As this approach does not make use of the incrementally revealed observations within the recognition process, any deviation from the initially calculated path, $i_g$, will have considerable impact on recognition results.

By converting to the online \emph{Baseline} algorithm, we were able to more than double the convergence percent to 21.8\%. 
Each incremental observation was now taken into account during the reuse of the planner and therefore had greater weight on the ranking of the goals.  
Applying both  the \emph{Pruning} and \emph{Recomputation} heuristics further improve the overall convergence. By eliminating goals the ranking process now proved to be easier, as there were less goals to compare to. Furthermore, the early elimination of goals in the pruning process was able to also eliminate the further noise these goals might introduce to the ranking process, when their paths deviated from the optimal. The \emph{Recomputation} heuristic increases it to 25.4\% and the \emph{Pruning} to  42.2\%, an improvement of 20.4\% from the \emph{Baseline} approach. When utilizing both heuristics we see that the high convergence level obtained by the \emph{Pruning} heuristic is maintained.

Column 4, measures the percent of times the correct goal was ranked first. 
Here too a higher value is better and will reflect on overall reliability of the ranking procedure. The results mostly agree with the convergence results. With no planner reuse at all, \emph{No Recompute}, 
performs poorly with a low 9.5\%. 
\emph{Baseline} more than doubles the success here as well, to 20.2\%. The \emph{Recomputation} heuristic achieves 33.9\% and the \emph{Pruning} heuristic  increases the results to 40.5\%, an improvement of 20.3\% from the \emph{Baseline} approach. Again, when applying both heuristics the success level of the \emph{Pruning} method is obtained.

Employing the heuristics has made a big impact on overall performance successfully increasing convergence and overall correct rankings. The \emph{Pruning} heuristic outperformed the \emph{Recomputation} heuristic in both measures and a combination of both heuristics maintains the high success rate leading to an improvement of over 20\% in both measures.

\vspace{-2pt}
\subsection{Sensitivity to recognition difficulty}
\label{sec:exp:harder}
\vspace{-2pt}

In online, continuous domains, the hardness of the recognition problem could possibly effect recognizer performance and efficiency. We wanted to evaluate the sensitivity of the
results shown above to the hardness of the recognition problems. We therefore added another 9 goal points 
(e.g., 19 potential goals in each recognition problem), 
for a total of 380 recognition problems.
These extra points were specifically added in close proximity to some of the preexisting 10 points, such that navigating towards any one of them appears (to human eyes) to be just as possible as any other.

Table~\ref{tab:allResults}, columns 5--6, examines the \emph{efficiency} of the different online recognition approaches over the \emph{harder} clustered goals problems. We omitted the \emph{No Recompute} heuristic in these instances as the behavior of this heuristic is very straightforward. The results are consistent with the results from the original scenario. The \emph{Baseline} approach is the least efficient, having a higher run-time and larger number of calls to the planner, than the rest. The most efficient approach is still the approach of utilizing both the \emph{Pruning} heuristic and the \emph{Recompute} heuristic together. In run-time the \emph{Recompute} heuristic is still more efficient than the \emph{Pruning} however for the measure of number of calls made to the planner we see that, for more clustered goals scenarios, the \emph{Pruning} heuristic slightly outperforms the \emph{Recompute} heuristic.

Table~\ref{tab:allResults}, columns 7--8, examines the \emph{performance} of the different online recognition approaches over the \emph{harder} clustered goals problems. For the harder problems the best performance achieved, in terms of convergence, was by the \emph{Pruning} heuristic with a convergence of 23.18\% from the end. In terms of the amount of times the correct goal was ranked first the \emph{Both} approach, combining both \emph{Pruning} and \emph{Recompute} heuristics, only slightly outperformed the \emph{Pruning} approach. The worst performance was achieved by the \emph{Baseline} approach, in terms of both criteria measured; convergence and ranked first, in congruence with the performance results of the scattered goal scenario.

\begin{table*}[htbp]
\fontsize{7.5}{9}\selectfont
\centering
\begin{tabular}{|c|c|c|c|c|}
\cline{2-5}
\multicolumn{1}{l}{} & \multicolumn{4}{|c|}{\textbf{Deterioration}} \\ \cline{2-5}
\multicolumn{1}{l|}{}  & \multicolumn{2}{|c|}{\textbf{Efficiency}} & \multicolumn{2}{|c|}{\textbf{Performance}} \\ \cline{2-5}
\multicolumn{1}{l|}{} & \textbf{Run-Time} & \textbf{PlannerCalls} & \textbf{Conv.} & \textbf{Rank.} \\ \hline
\textbf{Baseline} & 85.35\% & 94.90\% & 24.98\% & 3.46\% \\ \hline
\textbf{Recompute} & 153.58\% & 167.11\% & 26.49\% & 32.88\% \\ \hline
\textbf{Prune} & 114.01\% & 155.20\% & 45.02\% & 40.67\% \\ \hline
\textbf{Both} & 167.58\% & 216.90\% & 50.53\% & 35.79\% \\ \hline
\end{tabular}
\caption{Deterioration of performance and efficiency between scattered and clustered goal scenarios.}
\label{tab:deterioration}
\end{table*}

Table~\ref{tab:deterioration} measures the deterioration in efficiency and performance with comparison to the scattered goal scenario. The deterioration is measured in terms of deterioration percent, hence a 100\% deterioration in run-time means the planner took twice as long on average, on the harder problems. Therefore lower values are better. In terms of efficiency, we can clearly see that the least deterioration, both in run-time and number of calls to the planner, occurred for the \emph{Baseline} approach proving this approach to be very reliable with a deterioration of 85.35\% and 94.90\% respectively. 
The biggest deterioration in terms of run-time occurred for the combination of both heuristics with a deterioration of 167.58\%. This was considerably caused by the substantial deterioration of the \emph{Recompute} approach which deteriorated by 153.58\%. The \emph{Pruning} heuristic deteriorated considerable less in terms of run-time with only 114\% deterioration. 

In terms of number of calls made to the planner, again, the worst deterioration occurred for the \emph{Both} approach, with a deterioration of 216.9\% while the deterioration for each of the heuristics was considerably less; 155.2\% for the \emph{Pruning}  heuristic and 153.6\% for the \emph{Recomputation} heuristic. 

In terms of performance deterioration we again see that the most resilient approach in terms of performance, as well as efficiency, proved to be the \emph{Baseline} both in terms of Convergence and Ranked first with a deterioration of 25.98\% in convergence and only 3.46\% in ranked first. The biggest deterioration in convergence occurred for the \emph{Both} approach, as was in the efficiency results. However, in terms of ranked first the biggest deterioration occurred for the \emph{Pruning} heuristic. This was, in part, due to the fact that clustered goals make the pruning process considerably less efficient as the goals are too close to be pruned.

\subsection{Online Goal Recognition on Robots}
\label{sec:bestWorstCaseCompare}

As a final set of experiments, and to show the applicability of our approach, we implemented Alg.~\ref{alg:online} in a cooperative robotic team task.  
We used ROS~\cite{quigley2009ros} to control simulated robots in Gazebo, using the default ROS motion planner, \emph{move\_base}, in the recognition process. 
We simulated a soccer field, with two robots operating as members of the same team (Figure~\ref{fig:SoccerSimulation}). 
The observed robot was given an initial goal to travel to, proceeding to execute the plan in a straightforward manner, and the observing robot had to strategically place itself in a pre-chosen position to assist the other robot team member. If the observed robot navigated to \emph{goal 4} the strategic place to assist it on the offense would be to navigate to \emph{goal 3} and vice versa. Likewise also with goals 1 and 2. 

The observed robot always started at the same initial point in the middle of the field, while we experimented with 3 different starting points for the observing robot; two points behind the observed robots position and on parallel sides (Figure~\ref{fig:SoccerSimulation}, init points 1 and 2) 
and one point past the observed robot in the middle of the field (init point 3).
We ran 10--20 runs from each initial position to each of the goals for a total of 193 problems.

We compared our online goal recognizer (\emph{OGR}) in its baseline form, to two different approaches: (a) giving full knowledge (\emph{FK}) of the intended goal to the observing robot, ahead of time, allowing the observing robot to  navigate directly towards it; and (b) giving no (zero) knowledge (\emph{ZK}) of intended goal, thus forcing the observing robot to wait for its team member to reach its desired goal, before it can navigate towards the complementary location. 

To evaluate the different approaches we measured the overall time 
(in seconds) the simulated robot ran until reaching its target goal. The lower the time the more efficient the robot.
The results are displayed in Table~\ref{tab:ZK_FK_GM}.

\begin{figure}[htbp]
\centering{
\includegraphics[clip,width=0.60\columnwidth,keepaspectratio]{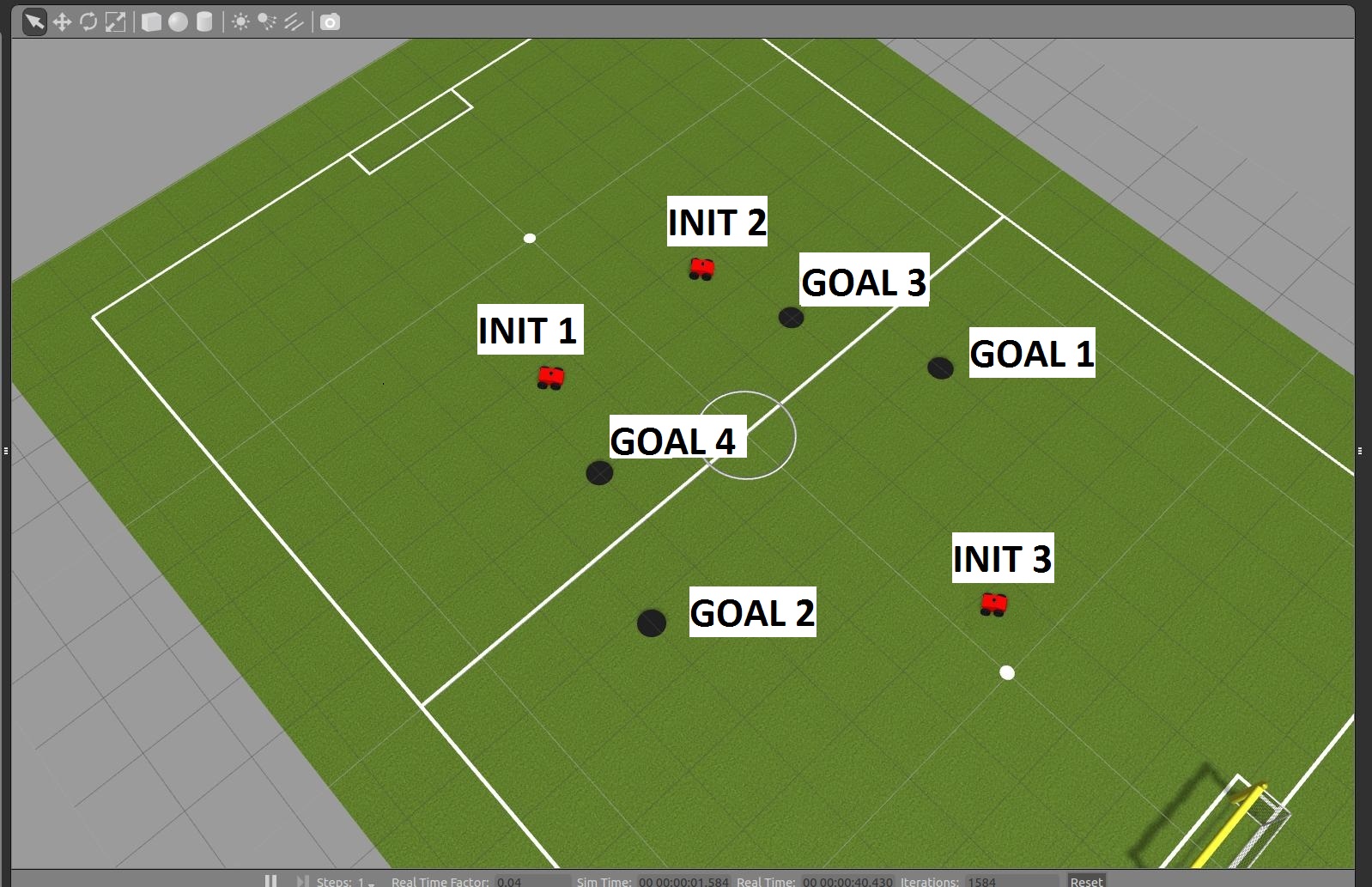}}
\caption{Experiment setup (via RVIZ)}
\label{fig:SoccerSimulation}
\end{figure}

\begin{table}[htbp]
\fontsize{6.5}{9}\selectfont
\centering
\begin{tabular}{|c|c|c|c|c|}
\cline{3-5}
\multicolumn{1}{l}{} & \multicolumn{1}{l|}{} & \textbf{FK} & \textbf{OGR} & \textbf{ZK} \\ \hline
\textbf{I1} & \textbf{G1} & 10.00 & 17.35 & 21.50 \\ \hline
\textbf{} & \textbf{G2} & 5.80 & 12.31 & 17.18 \\ \hline
\textbf{} & \textbf{G3} & 9.10 & 16.19 & 20.43 \\ \hline
\textbf{} & \textbf{G4} & 5.80 & 17.10 & 26.03 \\ \hline
\textbf{I2} & \textbf{G1} & 5.80 & 14.09 & 17.67 \\ \hline
\textbf{} & \textbf{G2} & 9.10 & 19.56 & 24.45 \\ \hline
\textbf{} & \textbf{G3} & 10.00 & 15.89 & 25.45 \\ \hline
\textbf{} & \textbf{G4} & 12.35 & 15.32 & 32.62 \\ \hline
\textbf{I3} & \textbf{G1} & 12.41 & 17.36 & 20.88 \\ \hline
\textbf{} & \textbf{G2} & 10.10 & 18.65 & 24.26 \\ \hline
\textbf{} & \textbf{G3} & 9.24 & 10.62 & 20.66 \\ \hline
\textbf{} & \textbf{G4} & 5.72 & 13.40 & 20.40 \\ \hline
\end{tabular}
\caption{Online goal recognizer  vs. full and zero knowledge}
\label{tab:ZK_FK_GM}
\end{table}

The results show that 
the goal recognition approach substantially improves on the \emph{zero knowledge} approach, while requiring no precalculations; all needed plans are generated on-the-fly via the planner. Understandably our approach falls short of the \emph{full knowledge} approach as it generates hypotheses on the fly following observations, which leads to some deviations from the optimal, direct route.

\section{Summary}
\label{sec:fin}
\vspace{-4pt}
We presented 
an efficient, heuristic, online goal recognition
approach which utilizes a planner in the recognition process to generate recognition hypotheses. We identified key decision points which effect both overall run-time and the number of calls made to the planner and introduced a generic online goal recognition algorithm along with two heuristics to improve planner performance and efficiency in navigation goal recognition. 
We evaluated the approach in a challenging navigational goals domain over hundreds of experiments and varying levels of problem complexity.
The results demonstrate the power of our proposed heuristics and show that,
while powerful by themselves, a combination of them leads to a reduction of a substantial
63\% of the calls the recognizer makes to the planner and planner run-time in comparison with previous work. This, while showing an increase of over 20\% in recognition measures. We further demonstrated the algorithm in a realistic simulation of a simple robotic team task, and showed that it is capable of recognizing goals using standard
robotics motion planners.


\bibliographystyle{named}
\bibliography{full,all,plan-rec,myconf,myjournals,AComputationalCognitiveModelOfMirroringProcessesBibliography}

\end{document}